\documentclass{ifacconf}

\usepackage{amsmath, amssymb, amsfonts} 
\usepackage{ifthen,tikz} 
\usepackage{pgfplots} 
\pgfplotsset{compat=1.17}
\usepackage{xcolor}
\usepackage{balance}

\usepackage{ulem}
\pgfmathdeclarefunction{erf}{1}{%
  \pgfmathparse{(#1 < 0 ? -1 : 1) * (1 - (0.254829592/(1+0.3275911*abs(#1)) -
  0.284496736/(1+0.3275911*abs(#1))^2 + 1.421413741/(1+0.3275911*abs(#1))^3 -
  1.453152027/(1+0.3275911*abs(#1))^4 + 1.061405429/(1+0.3275911*abs(#1))^5)
  * exp(-#1^2))}%
}

\usepackage{natbib}        

\begin{document}
\begin{frontmatter}

\title{Local Stability and Gaussian Smoothing of Quantized Neural Networks} 

\thanks[footnoteinfo]{\copyright~2026 the authors. This work has been accepted to IFAC for
publication under a Creative Commons Licence CC-BY-NC-ND.}


\author[First]{Sergey Salishev} 
\author[First]{Anton Makarov} 
\author[First]{Oleg Granichin} 

\address[First]{St. Petersburg State University, 
   199034, Saint Petersburg, Russia (e-mail: s.salischev@spbu.ru, a.a.makarov@spbu.ru, o.granichin@spbu.ru)}

\begin{abstract}                
We study Gaussian averaging as a smooth surrogate for quantized neural models. Under bounded local oscillation,
we derive a local dimension-dependent bound on \(|f-g|\), linking Gaussian smoothing to the stability analysis of
discontinuous networks. We compute closed-form Gaussian averages of the rectified linear unit (ReLU) and sign activation functions,
and illustrate the mechanism
on a high-dimensional binary perceptron, where layer-preactivation aggregation under an explicit
quantization-noise surrogate yields the Gaussian envelope used in inference-side smoothing and training-side
smooth surrogate gradients.

\end{abstract}

\begin{keyword}
Quantized neural networks, Low-bit neural networks, Gaussian smoothing, Steklov averages, Sobolev convolution,
Stability under input and parameter perturbations
\end{keyword}

\end{frontmatter}

\balance
\section{Introduction}

Modern control and estimation systems now deploy low-bit neural networks as observers, controllers, and gain schedulers
on quantized hardware. Their discrete, non-smooth nature blocks standard stability certificates and complicates
gradient-based design, motivating principled smoothing.

Functionally, such networks are often piecewise-affine for continuous inputs, but after digitization of the
inputs and internal signals they behave effectively as piecewise-constant maps; in either description,
quantization, saturation, clipping, and switching nonlinearities induce a discontinuous input-output law. Yet many control and estimation tools
(stability certificates, sensitivity analysis, gradient-based optimization) assume at least Sobolev regularity.
We therefore study Gaussian averaging as a way to construct a continuous analytical proxy for the same discrete
model. The key structural condition is
\textit{bounded local oscillation} (Definition~\ref{def:blo}): a weak quantitative substitute for Lipschitz-type
regularity that controls output variation at a prescribed scale without assuming continuity or differentiability.

Gaussian averaging plays here the role of a mollifier, but the object being smoothed is the hardware-driven
quantized model itself rather than an externally prescribed smooth target. The resulting surrogate is used for
inference-side stability analysis and training-side smooth surrogate gradients, in a spirit related to stochastic
approximation and randomized-control viewpoints
\citep{Spall-1992-SPSA,GranichinVolkovichToledanoKitai2015Randomized}.

The use of matched-variance rounding-noise surrogates is supported by \citet{Salishev-2026-SPSAView}, where the
STE/noise-scale update mechanism is analyzed from an SPSA viewpoint and validated empirically for low-bit
neural-network training. That evidence also has an inference-side reading for smooth image-to-image tasks:
for super-resolution, Salishev et al.\ report that RFDN quantized to W4A4 keeps overlapping final PSNR
confidence intervals across stochastic and deterministic rounding-noise variants. A rigorous control paper
would replace this empirical PSNR evidence by an explicit tail-probability bound, but the observation is
consistent with the probabilistic disturbance interpretation used here: on the tested data distribution,
harmful quantization events are either rare or sufficiently small in magnitude to be absorbed by the
reconstruction task. The formal inference-side smoothing and local error control are provided here by
Theorem~\ref{thm:stability} and the layer-wise CLT mechanism of Theorem~\ref{thm:layer-aggregation}. This
averaging viewpoint also helps interpret empirical smooth surrogates used in practice, including learned step
size quantization (LSQ) \citep{Esser-2019-LSQ} and gradual differentiable noise scale quantization (GDNSQ)
\citep{Salishev-2025-GDNSQ}.

Comparison with explicitly sampled-Gaussian methods is useful. In randomized smoothing
\citep{Cohen-2019-RandomizedSmoothing}, the model is averaged over externally sampled Gaussian perturbations to
obtain a certified smooth proxy. In SmoothHess \citep{Torop-2023-SmoothHess}, Gaussian samples are likewise
drawn explicitly and combined with Stein-type identities to estimate feature interactions in ReLU networks. In
the present paper, by contrast, the Gaussian is not the primitive perturbation law. Theorem~\ref{thm:layer-aggregation}
formalizes a layer-level mechanism: under an explicit i.i.d.\ additive-noise quantization surrogate,
componentwise residuals aggregate inside a preactivation and produce the CLT-Gaussian envelope used to define
the averaged surrogate \(g\).

For control-oriented use, this CLT-Gaussian viewpoint should be read as a compact analytical proxy whose local
failures are rare disturbance events rather than deterministic failure of the surrogate. Modern stochastic,
quantized, and networked-control formulations can allow such events when their probability distribution is
quantified and the closed-loop system attenuates their effect, for example through inertia, bandwidth limitation,
or robustness margins. In this sense, Theorem~\ref{thm:stability} separates the nominal local mismatch
\(\varepsilon_2\) from the Gaussian tail contribution corresponding to excursions outside the trusted region.
A full closed-loop probability-of-bad-event analysis is outside the scope of this short paper. Within this
scope, \(g(\cdot,s)\) can be used when deriving Lyapunov-style stability conditions or sensitivity bounds
\citep{SlotineLi1991AppliedNonlinearControl}, since \(g\) is \(C^\infty\) and its derivatives admit explicit
integral representations.
Here and throughout, ``stability'' means support for Lyapunov-style analysis rather than a full closed-loop
stability theorem.

Specifically, we make the following contributions.

\begin{itemize}
  \item Treating data and parameters uniformly as inputs, we show that Gaussian averaging produces a \(C^\infty\)
        surrogate \(g(x,s)\) with explicit mixed-derivative formulas.
  \item Under boundedness and bounded local oscillation, we derive a local dimension-dependent estimate for
        \(|f(x)-g(x,s)|\) (Theorem~\ref{thm:stability}) in terms of the noise variance, oscillation radius, and
        supremum of \(|f|\).
  \item We compute the Gaussian averages of ReLU and sign functions in closed form and relate them to a high-dimensional
        binary perceptron, where aggregation of quantization residuals in a layer preactivation yields the
        Gaussian envelope used by both inference-side smoothing and training-side surrogate gradients.
\end{itemize}

The paper therefore separates the two roles of noise cleanly: Gaussian convolution gives smoothness, while
approximation fidelity depends on local control of the oscillation of the discrete model.

\section{Model}
We now formalize the setting and define bounded local oscillation, the weak regularity condition used in the
approximation bound. Here and below $|\cdot|$ denotes the Euclidean norm in $\mathbb{R}^d$.
Let \( D \subset \mathbb{R}^d \) be some
bounded open set.

\begin{defn}\label{def:blo}
Let \(E \subset D\). We say that \(f\) has \((\varepsilon_1,\varepsilon_2)\)-\textit{bounded local oscillation on} \(E\) if
\[
|x-y|<\varepsilon_1 \quad \Longrightarrow \quad |f(x)-f(y)|<\varepsilon_2
\]
for every \(x\in E\) and every \(y\in D\). If $E$ is a point, e.g., \(E=\{x\}\), we say that \(f\) has
\((\varepsilon_1,\varepsilon_2)\)-\textit{bounded local oscillation at the point} \(x\).
This is a weak quantitative substitute for Lipschitz-type regularity in a discontinuous setting: it controls
oscillation at a prescribed scale without assuming continuity, differentiability, or a linear modulus of
variation.
\end{defn}

\begin{lem}[Data-supported oscillation]
\label{lem:data-supported-blo}
Let \(S\) be a training set with samples \((x_i,y_i)\), \(i=1,\ldots,N\), and let \(f\) be a trained network.
Assume that \(f\) fits the training labels uniformly with error \(\delta\), that is,
\[
|f(x_i)-y_i|\leq \delta,\qquad i=1,\ldots,N.
\]
Assume also that the labels have empirical local oscillation \(\eta\) at scale \(\varepsilon_1\), namely,
\[
|x_i-x_j|<\varepsilon_1 \quad \Longrightarrow \quad |y_i-y_j|\leq \eta
\]
for all training samples \(x_i,x_j\). Then \(f\) has empirical bounded local oscillation on \(S\) at scale
\(\varepsilon_1\) with constant \(\varepsilon_2=\eta+2\delta\):
\[
|x_i-x_j|<\varepsilon_1 \quad \Longrightarrow \quad |f(x_i)-f(x_j)|\leq \eta+2\delta.
\]
\end{lem}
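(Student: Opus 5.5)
The argument is a single application of the triangle inequality, inserting the labels as intermediate points. Fix two training samples \(x_i,x_j\) with \(|x_i-x_j|<\varepsilon_1\). Decompose
\[
f(x_i)-f(x_j) = \bigl(f(x_i)-y_i\bigr) + \bigl(y_i-y_j\bigr) + \bigl(y_j-f(x_j)\bigr).
\]
Bound the three terms separately as follows.

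The first and third terms are each at most \(\delta\) in absolute value, by the uniform fitting assumption applied at indices \(i\) and \(j\). The middle term is at most \(\eta\), because the hypothesis \(|x_i-x_j|<\varepsilon_1\) is exactly the premise of the empirical label-oscillation assumption. Summing gives \(|f(x_i)-f(x_j)|\le \eta+2\delta\), which is the claim.

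No step is a real obstacle. The one point to state explicitly is the scope of the conclusion. It holds only on pairs of training points, so it is an empirical, non-strict (\(\le\) rather than \(<\)) analogue of Definition~\ref{def:blo}. It does not assert the full condition of Definition~\ref{def:blo}, in which \(y\) ranges over all of \(D\). Extending it to \(D\) would need an additional interpolation or regularity assumption on \(f\) between samples, and the lemma does not claim that.
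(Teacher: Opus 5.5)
Your proof is correct and is the paper's argument: the same three-term triangle-inequality decomposition through the labels \(y_i,y_j\), giving \(\delta+\eta+\delta\). Your scope remark also matches the paper, which notes that the strict convention of Definition~\ref{def:blo} requires replacing \(\eta+2\delta\) by any larger constant, and that off-support behavior must be controlled separately.
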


\begin{pf}
By the triangle inequality,
\[
\begin{aligned}
|f(x_i)-f(x_j)|
&\leq |f(x_i)-y_i|+|y_i-y_j|+|y_j-f(x_j)|\\
&\leq \delta+\eta+\delta=\eta+2\delta.
\end{aligned}
\]
For the strict convention in Definition~\ref{def:blo}, replace \(\eta+2\delta\) by any larger constant.
\hspace*{\fill}\qed
\end{pf}

The practical interpretation is that \(\varepsilon_1\) and \(\varepsilon_2\) can be tied to task/data geometry
and interpolation quality on the data support. To apply Theorem~\ref{thm:stability} to Gaussian smoothing in
the ambient space, one still needs control of off-support behavior.

We use this condition locally, either at a point of interest or on an operating region, rather than as a blanket
assumption on an arbitrary discontinuous network. In the neural-network interpretation, it means that small
perturbations of the data or parameters do not cause large output jumps on the region where the approximation is
to be trusted.

Under this condition, averaging over small input noise yields a natural infinitely smooth surrogate. The
present paper proves function-level regularity and approximation statements; optimization guarantees for the
original discrete learned parameters require additional assumptions. However, because the averaged surrogate is
\(C^\infty\), standard local smooth-optimization arguments can be applied to surrogate objectives built from
\(g\). As the input of the model, we allow the concatenation of data and parameters, so the same notation covers
perturbations in either.
Throughout, the perturbation is additive Gaussian noise, independent of both data and parameters, and we study
the expected output rather than a single noisy realization.

We denote the density of the multivariate normal distribution
\(\mathcal{N}(0,sI)\) with a zero mean and covariance \(sI\) by
\[
\phi_s(\xi)=\frac{1}{(2\pi s)^{d/2}}\exp\Bigl(-\frac{|\xi|^2}{2s}\Bigr),
\]
and the density of the standard normal distribution \(\mathcal{N}(0,I)\) by
\(\phi(\xi) := \phi_1(\xi)\).

Consider an integrable function \( f: D \to \mathbb{R}
\), \(f \in L_1(D)\), extended by zero outside \(D\). For \( s>0 \), define
\begin{equation}
\label{g(x,s)}
   g(x,s):=\mathbb{E}[f(x+\xi)] = \int\limits_{\mathbb{R}^d} f(x+\xi)\, \phi_s(\xi)\, d\xi,
\end{equation}
where \( \xi\sim \mathcal{N}(0,sI) \). Then for any multi-index \(\alpha\) there exist continuous derivatives \(
D^\alpha g(x,s)  \) given by
\begin{align*}
D^\alpha g(x, s) &:= \frac{\partial^{|\alpha|} g}{\partial x_1^{\alpha_1} \cdots \partial x_d^{\alpha_d}}(x) \\
                 &\;= (-1)^{|\alpha|} \int\limits_{\mathbb{R}^d} f(x+\xi) \, D^\alpha \phi_s(\xi) \, d\xi.
\end{align*}

\section{Main Result}

We obtain an estimate for the deviation of the averaged function from the original under a local oscillation
assumption: on the scale \(\varepsilon_1\), nearby inputs induce output changes no larger than
\(\varepsilon_2\). The result is most naturally read pointwise, or uniformly on an operating region where the
same constants \((\varepsilon_1,\varepsilon_2)\) apply.

\begin{thm}\label{thm:stability}
Let \( D \subset \mathbb{R}^d \) be some bounded open set with dimension \(d \ge 3\). Consider a function
\( f: D \to \mathbb{R} \), \(f \in L_1(D)\), extended by zero outside \(D\). Fix a point \(x\in D\) at
which \(f(x)\) is defined. Assume that there exists \(C>0\) such that \(|f(x)|\leq C\) and
\(|f(y)|\leq C\) for almost all \(y\in D\). Let constants
\(\epsilon_1 >0\), \(\epsilon_2 >0\) such that \(B(x,\epsilon_1)\subset D\) and
\[
|f(x)-f(y)|<\epsilon_2 \qquad \text{whenever } |x-y|<\epsilon_1.
\]
Assume furthermore that \(\gamma^2 := \epsilon_1^2/s \ge d-1\). Then
\begin{equation}
\label{f-g-itog}
|f(x)-g(x,s)| \le \epsilon_2 + 2\, C\,
\frac{\gamma^d \exp\!\left(-\frac{\gamma^2}{2}\right)}
{2^{\frac{d}{2}-1}\Gamma\!\left(\frac{d}{2}\right)\left(\gamma^2 - d + 2\right)} .
\end{equation}
\end{thm}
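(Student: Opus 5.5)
Since \(\phi_s\) is a probability density, we can write
\[
f(x)-g(x,s)=\int_{\mathbb{R}^d}\bigl(f(x)-f(x+\xi)\bigr)\phi_s(\xi)\,d\xi .
\]
We split the integral over the ball \(|\xi|<\epsilon_1\) and its complement. On the ball, \(x+\xi\in B(x,\epsilon_1)\subset D\), so the oscillation hypothesis gives \(|f(x)-f(x+\xi)|<\epsilon_2\). The Gaussian mass of the ball is at most one, so this part contributes at most \(\epsilon_2\). On the complement, \(x+\xi\) may lie in \(D\), where \(|f|\le C\) a.e., or outside \(D\), where \(f=0\). In both cases \(|f(x)-f(x+\xi)|\le 2C\). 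This part is therefore bounded by \(2C\,\mathbb{P}(|\xi|\ge\epsilon_1)\).

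It remains to bound the Gaussian tail. We write \(\xi=\sqrt{s}\,\zeta\) with \(\zeta\sim\mathcal{N}(0,I)\), so that
\[
\mathbb{P}(|\xi|\ge \epsilon_1)=\mathbb{P}(|\zeta|\ge\gamma)=\frac{1}{2^{d/2-1}\Gamma(d/2)}\int_\gamma^\infty r^{d-1}e^{-r^2/2}\,dr .
\]
This is the chi-distribution tail. It comes from polar coordinates, using the surface area of the unit sphere, \(2\pi^{d/2}/\Gamma(d/2)\), together with the normalization \((2\pi)^{-d/2}\). The claim therefore reduces to the one-dimensional estimate
\[
I_d(\gamma):=\int_\gamma^\infty r^{d-1}e^{-r^2/2}\,dr\le \frac{\gamma^d e^{-\gamma^2/2}}{\gamma^2-d+2}\qquad(\gamma^2\ge d-1).
\]

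To prove this estimate, integrate by parts with \(u=r^{d-2}\) and \(dv=re^{-r^2/2}dr\). This gives
\[
I_d=\gamma^{d-2}e^{-\gamma^2/2}+(d-2)I_{d-2},
\]
where \(d\ge3\) ensures \(d-2\ge1\) and the boundary term at infinity vanishes. Next, on \([\gamma,\infty)\) we have \(r^{d-3}\le r^{d-1}/\gamma^2\), so \(I_{d-2}\le I_d/\gamma^2\). Substituting, \(I_d(1-(d-2)/\gamma^2)\le\gamma^{d-2}e^{-\gamma^2/2}\). When \(\gamma^2>d-2\), which holds because \(\gamma^2\ge d-1\), the bracket is positive and we obtain
\[
I_d\le \frac{\gamma^{d}e^{-\gamma^2/2}}{\gamma^2-d+2}.
\]
Combining this with the normalizing constant and the factor \(2C\) yields exactly \eqref{f-g-itog}.

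The main obstacle is this tail estimate: we need a denominator exactly \(\gamma^2-d+2\) and we must check its positivity. The integration-by-parts bootstrap handles both at once. The assumption \(\gamma^2\ge d-1\) is only used to keep the denominator at least \(1\), and \(d\ge3\) justifies the recursion. Minor care is needed with a.e.\ statements: the oscillation hypothesis and the bound \(|f(y)|\le C\) hold for a.e.\ \(y\), which suffices inside the integrals.
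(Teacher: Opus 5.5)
Your proof is correct and follows the paper's argument essentially step by step. It uses the same split at $|\xi|=\epsilon_1$ to get $\epsilon_2 + 2C\,\mathbb{P}(|\xi|\ge\epsilon_1)$, the same chi-distribution tail, and the same integration-by-parts bootstrap $I_{d-2}\le I_d/\gamma^2$, which produces the denominator $\gamma^2-d+2$ and its positivity from $\gamma^2>d-2$. One small point of emphasis: the boundary term at infinity vanishes for every $d$, and what $d\ge 3$ actually secures is $d-2>0$, so that multiplying $I_{d-2}\le I_d/\gamma^2$ by $(d-2)$ preserves the direction of the inequality.
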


\begin{pf}
Let us estimate the modulus of the difference
\[
\begin{aligned}
|f(x)-g(x,s)| &= \left|\,\int\limits_{\mathbb{R}^d} \bigl(f(x)-f(x+\xi)\bigr) \phi_s(\xi)\,d\xi \right|\\[1mm]
&\le \int\limits_{\mathbb{R}^d} |f(x)-f(x+\xi)|\, \phi_s(\xi)\,d\xi.
\end{aligned}
\]
Split the integral into two parts:
\begin{equation}
\label{f-g}
\begin{aligned}
|f(x)-g(x,s)| &\le \int\limits_{|\xi|<\epsilon_1} |f(x)-f(x+\xi)|\, \phi_s(\xi)\,d\xi \\
&\quad+\int\limits_{|\xi|\ge \epsilon_1} |f(x)-f(x+\xi)|\, \phi_s(\xi)\,d\xi\\[1mm]
&\le \epsilon_2\,\mathbb{P}(|\xi|<\epsilon_1) + 2C\, \mathbb{P}(|\xi|\ge \epsilon_1)\\[1mm]
&\le \epsilon_2 + 2C\, \mathbb{P}(|\xi|\ge \epsilon_1).
\end{aligned}
\end{equation}
Since \( \xi\sim \mathcal{N}(0,sI) \), the equality \( \xi = \sqrt{s}\,z \) holds, where \( z\sim \mathcal{N}(0,I) \). Then
\[
\mathbb{P}(|\xi|\ge \epsilon_1)=\mathbb{P}\left(|z|\ge \gamma\right).
\]
It is known that for \( z\sim \mathcal{N}(0,I) \) the density of the norm \( |z| \) has the form
\[
p_{|z|}(r)=\frac{1}{2^{\frac{d}{2}-1}\Gamma\left(\frac{d}{2}\right)}\, r^{d-1} \exp\left(-\frac{r^2}{2}\right), \quad r\ge 0.
\]
Thus, for any \( t>0 \)
\begin{equation}
\label{Pzt}
\mathbb{P}(|z|\ge t) = \int\limits_{t}^{\infty} p_{|z|}(r)\,dr = \frac{1}{2^{\frac{d}{2}-1}\Gamma\left(\frac{d}{2}\right)} I(t),
\end{equation}
where
\begin{equation}
\label{I(t)}
I(t) := \int\limits_{t}^{\infty}  r^{d-1} \exp\left(-\frac{r^2}{2}\right) \,dr.
\end{equation}

For \(d \geq 3\) and \(t \ge \sqrt{d-1}\), using integration by parts, we have
\begin{equation}
\label{I(t)d>3}
I(t)=t^{d-2}\exp\left(-\frac{t^2}{2}\right) + (d-2)J(t),
\end{equation}
where \(J(t) := \int\limits_{t}^{\infty}  r^{d-3}\exp\left(-\frac{r^2}{2}\right) \,dr. \)

Due to the inequality \(0 < t \leq r\), we find
\[
J(t) \leq t^{-2} \int\limits_{t}^{\infty}  r^{d-1}\exp\left(-\frac{r^2}{2}\right) \,dr = \frac{I(t)}{t^{2}}.
\]
Using this estimate in equality (\ref{I(t)d>3}), we obtain
\[
I(t)(t^2 -d +2) \leq t^d\exp\left(-\frac{t^2}{2}\right).
\]
From this, we find a majorant for \(t > \sqrt{d-2}\)
\begin{equation}
\label{I(t)d-2}
I(t) \leq \frac{t^d\exp\left(-\frac{t^2}{2}\right)}{t^2 -d +2}.
\end{equation}

The integrand (\ref{I(t)}) attains its maximum at the point \(r=\sqrt{d-1}.\)
If \(t \geq \sqrt{d-1},\) then estimate (\ref{I(t)d-2}) is suitable for the integral (\ref{I(t)}).

Substituting the value \( t=\gamma \) into equality (\ref{Pzt}), using estimate (\ref{I(t)d-2}), we obtain
\[
\mathbb{P}\left(|z|\ge \gamma\right) \le \frac{1}{2^{\frac{d}{2}-1}\Gamma\left(\frac{d}{2}\right)}
\cdot \frac{\gamma^d\exp\left(-\frac{\gamma^2}{2}\right)}{\gamma^2 -d +2}.
\]
From this, taking into account inequality (\ref{f-g}), we find the desired estimate (\ref{f-g-itog}). \hspace*{\fill}\qed
\end{pf}

\begin{rem}\label{rem:operating-region}
Theorem~\ref{thm:stability} is local in the evaluation point \(x\). For quantized or piecewise-affine networks,
the natural use case is an operating region \(E\subset D\) on which the same pair
\((\epsilon_1,\epsilon_2)\) is valid for every \(x\in E\). In particular, the theorem should not be read as a
generic global property of arbitrary discontinuous networks.
\end{rem}

\begin{rem}
Bounded local oscillation is essential for approximation accuracy, not for smoothness. If \(f\) oscillates
strongly on arbitrarily small neighborhoods, then \(\epsilon_2\) in Theorem~\ref{thm:stability} cannot be made
small and Gaussian averaging need not remain faithful to the discrete model, even though \(g(\cdot,s)\) is
smooth for every \(f\in L_1(D)\).
\end{rem}

\section{Examples}

\subsection{One-dimensional smoothed activations}

We now compute Gaussian averages for the nonlinear ReLU and
\(\operatorname{sign}\) activation functions.
In this subsection, we restrict attention to the one-dimensional case \(d=1\).
We write $\phi_s(t)$ for the density of $\mathcal{N}(0,s)$ on $\mathbb{R}$ and
$\Phi_s$ (or simply $\Phi$ when $s=1$) for its cumulative distribution
function.

Let \(
\xi \) be a one-dimensional random variable distributed according to \( \mathcal{N}(0,s) \)
with \( s>0 \), and density $\phi_s$.
\begin{thm}
\label{thm:relu}
The average of the ReLU activation function
(given by \(f(x) = \max\{x,0\}\)) is
\begin{equation}
\label{gReLU}
\begin{aligned}
g(x,s)=\mathbb{E}[f(x+\xi)] &= \int\limits_{-\infty}^{+\infty}
\max\{x+\xi,0\}\, \phi_s(\xi)\,d\xi \\
&= x\,\Phi\Bigl(\frac{x}{\sqrt{s}}\Bigr) +
\sqrt{s}\,\phi\Bigl(\frac{x}{\sqrt{s}}\Bigr).
\end{aligned}
\end{equation}
\end{thm}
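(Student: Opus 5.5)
The computation reduces to a one-dimensional Gaussian integral over a half-line, split into a probability term and a first-moment term. Since \(\max\{x+\xi,0\}\) vanishes for \(\xi<-x\) and equals \(x+\xi\) otherwise, we write
\[
g(x,s)=\int\limits_{-x}^{+\infty}(x+\xi)\,\phi_s(\xi)\,d\xi
= x\int\limits_{-x}^{+\infty}\phi_s(\xi)\,d\xi+\int\limits_{-x}^{+\infty}\xi\,\phi_s(\xi)\,d\xi .
\]
Both integrals converge absolutely because of the Gaussian decay of \(\phi_s\), so the split is legitimate for every \(x\in\mathbb{R}\).

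For the first term we substitute \(\xi=\sqrt{s}\,u\), which gives \(\mathbb{P}(\xi\ge -x)=1-\Phi(-x/\sqrt{s})\). By the symmetry of the standard normal law this equals \(\Phi(x/\sqrt{s})\). This yields the summand \(x\,\Phi(x/\sqrt{s})\).

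For the second term we use the identity \(\xi\,\phi_s(\xi)=-s\,\phi_s'(\xi)\), which follows by differentiating the exponential. Then
\[
\int\limits_{-x}^{+\infty}\xi\,\phi_s(\xi)\,d\xi=-s\bigl[\phi_s(\xi)\bigr]_{-x}^{+\infty}=s\,\phi_s(-x)=s\,\phi_s(x),
\]
using evenness of \(\phi_s\) and \(\phi_s(\xi)\to0\) as \(\xi\to\infty\). Finally the scaling relation \(\phi_s(x)=s^{-1/2}\phi(x/\sqrt{s})\) converts this into \(\sqrt{s}\,\phi(x/\sqrt{s})\). Summing the two terms gives (\ref{gReLU}).

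There is no real obstacle; the only places where care is needed are the sign and symmetry bookkeeping in the lower limit \(-x\) and the normalization \(s\,\phi_s=\sqrt{s}\,\phi(\cdot/\sqrt{s})\). As a sanity check, differentiating the result in \(x\) gives \(\Phi(x/\sqrt{s})\), since the \(\phi\)-terms cancel via \(\phi'(u)=-u\phi(u)\). This agrees with the Gaussian average of the Heaviside derivative of ReLU, and it also confirms \(g\to\max\{x,0\}\) as \(s\to0\).
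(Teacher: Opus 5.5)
Your proof is correct and follows the paper's route: the same split into $x\,\mathbb{P}(\xi\ge -x)=x\,\Phi(x/\sqrt{s})$ plus a truncated first moment. The only cosmetic difference is in the second term. You integrate $\xi\phi_s(\xi)=-s\,\phi_s'(\xi)$ directly and rescale afterward, whereas the paper first substitutes $z=\xi/\sqrt{s}$ and then uses $\int_a^{\infty} z\,\phi(z)\,dz=\phi(a)$.
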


\begin{pf}
Note that the function \(\max\{x + \xi,0\} \) equals \( 0 \)
when \( \xi\leq -x \). Therefore, the integral (\ref{gReLU}) is
\begin{equation}
\label{gxssum}
\begin{aligned}
g(x,s)&=\int\limits_{-x}^{+\infty} (x+\xi)\, \phi_s(\xi)\,d\xi \\
&= x\int\limits_{-x}^{+\infty} \phi_s(\xi)\,d\xi
 + \int\limits_{-x}^{+\infty} \xi\,\phi_s(\xi)\,d\xi.
\end{aligned}
\end{equation}

By introducing the substitution \( z={\xi}/{\sqrt{s}} \), we find
\begin{equation}
\label{F1}
\int\limits_{-x}^{+\infty}
\phi_s(\xi)\,d\xi =\int\limits_{-x/\sqrt{s}}^{+\infty}\frac{1}{\sqrt{2\pi}}
\exp\Bigl(-\frac{z^2}{2}\Bigr)\,dz =\Phi\Bigl(\frac{x}{\sqrt{s}}\Bigr).
\end{equation}

Similarly,
\begin{equation}
\label{npn}
\begin{aligned}
\int\limits_{-x}^{+\infty} \xi\, \phi_s(\xi)\,d\xi &=\int\limits_{-x/\sqrt{s}}^{+\infty}
\frac{\sqrt{s}\,z}{\sqrt{2\pi}}\exp\Bigl(-\frac{z^2}{2}\Bigr) \,dz\\[1mm]
&= \sqrt{s}\int\limits_{-x/\sqrt{s}}^{+\infty} z\, \phi(z)\,dz.
\end{aligned}
\end{equation}
As for the standard normal distribution
\[
\int\limits_{a}^{+\infty} z\,\phi(z)\,dz = \phi(a),
\]
setting \(
a=-{x}/{\sqrt{s}} \) and taking into account the even symmetry of \(\phi\),
from (\ref{npn}) we have
\[
\int\limits_{-x}^{+\infty} \xi\, \phi_s(\xi)\,d\xi =
\sqrt{s}\,\phi\Bigl(\frac{x}{\sqrt{s}}\Bigr).
\]

Thus, from the decomposition (\ref{gxssum}), using the previous equality and the
formula (\ref{F1}), the desired representation (\ref{gReLU}) is obtained. \hspace*{\fill}\qed
\end{pf}

\begin{figure}[htbp]
\centering
\begin{tikzpicture}
  \begin{axis}[
    xlabel={\(x\)}, ylabel={\(f(x)\)}, legend pos=north west, grid=both,
    domain=-3:3, samples=160, width=6.1cm, height=5.0cm,
    scale only axis, axis on top,
  ]

    \addplot[blue, thick] {x*0.5*(1+erf(x/1.4142135623730951))};
    \addlegendentry{GELU}

    \addplot[black, thick] {0.5*ln(1+exp(2*x))};
    \addlegendentry{\(\operatorname{softplus}(2x)/2\)}

    \addplot[green!70!black, thick] {x*0.5*(1+erf(x/1.4142135623730951)) + 0.3989422804014327*exp(-0.5*x^2)};
    \addlegendentry{\(g(x, 1)\)}

    \addplot[red, thick, domain=-3:0] {0};
    \addplot[red, thick, domain=0:3] {x};
    \addlegendentry{ReLU}
    
  \end{axis}
\end{tikzpicture}\vspace{-10pt}
\caption{Smoothing of ReLU activation function}
\label{fig:relu}
\end{figure}

Fig.~\ref{fig:relu} compares the ReLU function, its Gaussian average at \(s = 1\), and the classical smooth
approximations given by the Gaussian error linear unit \(\operatorname{GELU}(x)=x\,\Phi(x)\)~\citep{hendrycks2016gaussian}
and
\(\operatorname{softplus}(x)=\ln(1+\exp(x))\). We plot $\operatorname{softplus}(2x)/2$ to align its slope near the origin and
its large-\(x\) behavior with ReLU.

\begin{thm}
\label{thm:sign}
Define the function sign as
\[
\operatorname{sign}(x)=
\begin{cases}
1, & x>0,\\[1mm]
-1, & x<0.
\end{cases}
\]
Then its average is
\begin{equation}
\label{gxs3}
g(x,s)=\mathbb{E}\bigl[\operatorname{sign}(x+\xi)\bigr] =2\,\Phi\Bigl(\frac{x}{\sqrt{s}}\Bigr)-1.
\end{equation}
The \(\operatorname{sign}(0)\) value is irrelevant for the expectation and can
be chosen arbitrarily.
\end{thm}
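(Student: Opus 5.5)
The plan is to compute the expectation directly by splitting the real line at the single discontinuity of the integrand, as in the proof of Theorem~\ref{thm:relu}. For fixed \(x\), the function \(\xi\mapsto\operatorname{sign}(x+\xi)\) equals \(1\) for \(\xi>-x\) and \(-1\) for \(\xi<-x\). The point \(\xi=-x\) has Lebesgue measure zero, and therefore probability zero under \(\mathcal{N}(0,s)\). This already disposes of the final claim: the value assigned to \(\operatorname{sign}(0)\) does not affect the integral. Hence
\[
g(x,s)=\int\limits_{-x}^{+\infty}\phi_s(\xi)\,d\xi-\int\limits_{-\infty}^{-x}\phi_s(\xi)\,d\xi .
\]

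Next I will evaluate the two tail integrals. The first is exactly the quantity computed in (\ref{F1}): after the substitution \(z=\xi/\sqrt{s}\) and use of the symmetry of \(\phi\), it equals \(\Phi(x/\sqrt{s})\). The second is the complementary probability. Since \(\phi_s\) integrates to one, it equals \(1-\Phi(x/\sqrt{s})\). Subtracting gives \(2\Phi(x/\sqrt{s})-1\), which is (\ref{gxs3}).

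As an alternative route and a cross-check, I can write \(\operatorname{sign}(u)=2\mathbf{1}_{\{u>0\}}-1\) almost everywhere. Linearity of expectation then gives \(g=2\,\mathbb{P}(\xi>-x)-1\), and the same formula follows immediately.

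There is no real obstacle. The only points needing care are measurability and integrability, which hold because \(\operatorname{sign}\) is bounded, and the null set \(\{\xi=-x\}\), which justifies the remark that the choice of \(\operatorname{sign}(0)\) is irrelevant. One caveat is that the general setup of Section~2 assumed \(f\in L_1(D)\) on a bounded set, whereas here the integral is taken over all of \(\mathbb{R}\) with bounded \(f\). The integral still converges absolutely, since \(|\operatorname{sign}|\le1\) and \(\phi_s\) is a probability density, so no extension-by-zero convention is needed.
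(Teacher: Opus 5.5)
Your proposal is correct and follows essentially the paper's route. The paper writes $g(x,s)=\mathbb{P}(x+\xi>0)-\mathbb{P}(x+\xi<0)$, uses complementarity to get $1-2\,\mathbb{P}(x+\xi<0)$, and evaluates that probability as $\Phi(-x/\sqrt{s})=1-\Phi(x/\sqrt{s})$, which is the same half-line computation you carry out via (\ref{F1}). Your explicit observation that $\{\xi=-x\}$ is a null set is a useful addition, since the paper's complementarity step $\mathbb{P}(x+\xi>0)+\mathbb{P}(x+\xi<0)=1$ silently relies on it.
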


\begin{pf}
Since \(\xi\sim \mathcal{N}(0,s)\), the random variable \(x+\xi\) has a distribution \(\mathcal{N}(x,s)\). The function
\(\operatorname{sign}(x+\xi)\) takes the value \(1\) if \(x+\xi>0\), and \(-1\) if \(x+\xi<0\). Thus,
\begin{equation}
\label{gxs2}
\begin{split}
g(x,s) &= \mathbb{E}\bigl[\operatorname{sign}(x+\xi)\bigr] \\
       &= \mathbb{P}(x+\xi>0) - \mathbb{P}(x+\xi<0).
\end{split}
\end{equation}

Taking into account that \(\mathbb{P}(x+\xi>0)+\mathbb{P}(x+\xi<0)=1\), we can write
\[
g(x,s)=1-2\,\mathbb{P}(x+\xi<0).
\]
Since \(x+\xi\sim \mathcal{N}(x,s)\), the probability \( \mathbb{P}(x+\xi<0) \) equals
\[
\mathbb{P}(x+\xi<0)=\Phi\Bigl(\frac{0-x}{\sqrt{s}}\Bigr)
=\Phi\Bigl(-\frac{x}{\sqrt{s}}\Bigr).
\]
Using the symmetry of the standard normal distribution, i.e., the equality \(\Phi(-t)=1-\Phi(t)\), we obtain
\[
\mathbb{P}(x+\xi<0)=1-\Phi\Bigl(\frac{x}{\sqrt{s}}\Bigr).
\]
Substituting this into expression (\ref{gxs2}), we have
\[
g(x,s)=1-2\Bigl[1-\Phi\Bigl(\frac{x}{\sqrt{s}}\Bigr)\Bigr]
=2\,\Phi\Bigl(\frac{x}{\sqrt{s}}\Bigr)-1.
\]
This completes the proof.
\hspace*{\fill}\qed
\end{pf}

\begin{rem} The function (\ref{gxs3}), mapping the real axis \(\mathbb{R}\) to the interval \((-1,1)\), serves as a
smooth approximation of the discrete function \(\operatorname{sign}(x)\).
\end{rem}

For comparison, the logistic function \(\sigma(x)=1/(1+\exp({-x}))\) satisfies \(2\sigma(x)-1=\tanh(x/2)\), so both
logistic and \(\tanh\) give smooth surrogates of \(\operatorname{sign}(x)\) similar to \(g(x,s)\); the parameter \(s\)
in \(g(x,s)\) controls the transition steepness.

Fig.~\ref{fig:sign} compares the smoothed sign surrogate \(g(x,s)\) with the hyperbolic tangent. The choice
\(s=2/\pi\) makes the derivative of \(g(x,s)\) at the origin match that of \(\tanh(x)\), so the two curves
have comparable transition steepness.

\begin{figure}[hb]
\centering
\begin{tikzpicture}
  \begin{axis}[
    xlabel={$x$},
    ylabel={$f(x)$},
    legend pos=north west,
    grid=both,
    domain=-3:3,
    samples=160,
    width=6.1cm,
    height=5.0cm,
    scale only axis, axis on top,
  ]
    \addplot[red, very thick, domain=-3:0] { -1 };
    \addplot[red, very thick, domain=0:3, forget plot] { 1 };
    \addlegendentry{\(\operatorname{sign}(x)\)}
    
    \addplot[blue, very thick] { tanh(x) };
    \addlegendentry{\(\tanh(x)\)}
    
    \addplot[green!70!black, very thick] { erf((sqrt(pi)/2)*x) };
    \addlegendentry{$g(x, 2/\pi)$}
  \end{axis}
\end{tikzpicture}\vspace{-10pt}
\caption{Smoothing of sign activation function}
\label{fig:sign}
\end{figure}

\subsection{High-dimensional binary perceptron}

We now illustrate how the bounded local oscillation condition and the Gaussian averaging mechanism interact
in a high-dimensional setting that is directly relevant to quantized policy and estimator blocks.
Threshold networks of this type appear naturally in quantized perception-and-control stacks and provide a
convenient high-dimensional test case.

Consider a two-layer binary perceptron (a building block of multilayer networks) interpreted as a quantized decision block:
\begin{equation}
\label{eq:perceptron}
f(x)=\frac{1}{\sqrt{m}}\sum_{j=1}^{m} a_j\,\sigma\!\left(\frac{w_j^\top x}{\sqrt{n}}-\theta_j\right),
\end{equation}
where \(x\in\mathbb{R}^n\) is the state or measurement vector, \(m\) is the hidden width,
\(\sigma\) is a sign-type or ReLU-type primitive, \(a_j\in\mathbb{R}\) are second-layer coefficients, $\theta_j\in\mathbb{R}$ is the bias (threshold) of the $j$-th hidden neuron,
and \(w_j^\top=(w_{jk})_{k=1}^n\), where \(w_{jk}\in\{-1,+1\}\) are quantized first-layer weights.

\begin{thm}\label{thm:layer-aggregation}
Fix a hidden unit \(j\) and a quantization step \(\Delta>0\). For each input dimension \(n\), suppose that:
\begin{enumerate}
\item the layer weights are deterministic and binary,
\(w_{jk}^{(n)}\in\{-1,+1\}\), and the threshold \(\theta_j^{(n)}\in\mathbb{R}\) is deterministic;
\item the input vector \(x^{(n)}=(x_1^{(n)},\ldots,x_n^{(n)})^{\top}\in\mathbb{R}^n\) is deterministic;
\item the componentwise quantization residuals are modeled via the additive-noise surrogate
\[
Q_\Delta(x_k^{(n)})=x_k^{(n)}+\delta_{k,n},
\]
where \(\delta_{1,n},\ldots,\delta_{n,n}\) are i.i.d.\ with the centered uniform law on
\([-\Delta/2,\Delta/2]\), consistent with the required assumptions in the classical quantization model \citep{lipshitz1992quantization}.
\end{enumerate}
Define
\[
\begin{aligned}
\bar z_{j,n} &=
\frac{1}{\sqrt{n}}\sum_{k=1}^{n} w_{jk}^{(n)}x_k^{(n)}-\theta_j^{(n)}, \\
\delta z_{j,n} &=
\frac{1}{\sqrt{n}}\sum_{k=1}^{n} w_{jk}^{(n)}\delta_{k,n}, \\
\hat z_{j,n} &= \bar z_{j,n}+\delta z_{j,n}.
\end{aligned}
\]
Then the preactivation perturbation satisfies
\[
\mathbb{E}[\delta z_{j,n}]=0,\qquad
\operatorname{Var}(\delta z_{j,n})=\frac{\Delta^2}{12},
\]
and
\[
\delta z_{j,n}\xrightarrow{d}\mathcal{N}\!\left(0,\frac{\Delta^2}{12}\right)
\qquad\text{as } n\to\infty.
\]
Consequently, if \(\bar z_{j,n}\to z_j\) along the same sequence, then
\[
\hat z_{j,n}\xrightarrow{d}\mathcal{N}\!\left(z_j,\frac{\Delta^2}{12}\right).
\]
\end{thm}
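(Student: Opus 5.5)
The plan is to treat \(\delta z_{j,n}\) as a normalized sum of independent, bounded, centered random variables and apply a central limit theorem for triangular arrays. The first step is the moment computation. Write \(X_{k,n}:=w_{jk}^{(n)}\delta_{k,n}/\sqrt{n}\). Since \(w_{jk}^{(n)}\in\{-1,+1\}\) is deterministic and the uniform law on \([-\Delta/2,\Delta/2]\) is symmetric, each \(w_{jk}^{(n)}\delta_{k,n}\) is again uniform on \([-\Delta/2,\Delta/2]\). Its mean is \(0\) and its variance is \(\Delta^2/12\). By linearity of expectation, \(\mathbb{E}[\delta z_{j,n}]=0\). By independence, and using \((w_{jk}^{(n)})^2=1\),
\[
\operatorname{Var}(\delta z_{j,n})=\frac{1}{n}\sum_{k=1}^n \frac{\Delta^2}{12}=\frac{\Delta^2}{12}.
\]

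For the distributional limit, the symmetry observation actually shows more. The variables \(w_{jk}^{(n)}\delta_{k,n}\), \(k=1,\ldots,n\), are i.i.d.\ uniform on \([-\Delta/2,\Delta/2]\), because multiplying independent symmetric variables by deterministic signs preserves both independence and the marginal law. Hence \(\delta z_{j,n}\) has, for each \(n\), the same law as \(n^{-1/2}\sum_{k=1}^n U_k\) with \(U_k\) i.i.d.\ uniform. The classical Lindeberg--L\'evy CLT then gives \(\delta z_{j,n}\xrightarrow{d}\mathcal{N}(0,\Delta^2/12)\).

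Convergence in distribution depends only on the laws of the \(\delta z_{j,n}\), so the triangular-array dependence of the signs on \(n\) causes no difficulty. To guard against a reader who objects to this step, I would also note the alternative route. Lyapunov's condition holds directly: \(|X_{k,n}|\le \Delta/(2\sqrt n)\), so \(\sum_k \mathbb{E}|X_{k,n}|^3\le \Delta^3/(8\sqrt n)\to 0\). Alternatively, characteristic functions give the same limit: \(\mathbb{E}e^{it\delta z_{j,n}}=\bigl(\operatorname{sinc}(t\Delta/(2\sqrt n))\bigr)^n\to e^{-t^2\Delta^2/24}\).

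Finally, \(\hat z_{j,n}=\bar z_{j,n}+\delta z_{j,n}\), where \(\bar z_{j,n}\) is deterministic and converges to \(z_j\). Slutsky's lemma, or just the continuity of translation in characteristic functions since \(e^{it\bar z_{j,n}}\to e^{itz_j}\), yields \(\hat z_{j,n}\xrightarrow{d}\mathcal{N}(z_j,\Delta^2/12)\).

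The only step needing any care is justifying the CLT in the triangular-array setting. I would handle it by the sign-symmetry reduction to the i.i.d.\ case, keeping the Lyapunov bound as a one-line backup. Everything else is routine.
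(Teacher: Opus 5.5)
Your proposal is correct and follows the paper's proof: sign symmetry makes the \(w_{jk}^{(n)}\delta_{k,n}\) i.i.d.\ uniform, you compute mean and variance directly, apply the classical CLT, and finish with Slutsky's lemma. Your explicit remark that \(\delta z_{j,n}\) has the same law as \(n^{-1/2}\sum_{k=1}^n U_k\), together with the Lyapunov and characteristic-function backups, makes precise the triangular-array point that the paper leaves implicit.
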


\begin{pf}
Set \(Y_{k,n}=w_{jk}^{(n)}\delta_{k,n}\). Since \(w_{jk}^{(n)}\in\{-1,+1\}\) is deterministic and the
uniform law on \([-\Delta/2,\Delta/2]\) is symmetric, each \(Y_{k,n}\) has the same law as \(\delta_{k,n}\).
Hence, for each \(n\), the variables \(Y_{1,n},\ldots,Y_{n,n}\) are i.i.d.\ with
\[
\mathbb{E}[Y_{k,n}]=0,\qquad
\operatorname{Var}(Y_{k,n})=\frac{\Delta^2}{12}.
\]
Therefore
\[
\mathbb{E}[\delta z_{j,n}]
=\frac{1}{\sqrt{n}}\sum_{k=1}^{n}\mathbb{E}[Y_{k,n}]=0
\]
and, by independence,
\[
\operatorname{Var}(\delta z_{j,n})
=\frac{1}{n}\sum_{k=1}^{n}\operatorname{Var}(Y_{k,n})
=\frac{\Delta^2}{12}.
\]
Because the \(Y_{k,n}\) are i.i.d.\ with finite variance, the classical central limit theorem gives
\[
\frac{1}{\sqrt{n}}\sum_{k=1}^{n}Y_{k,n}
\xrightarrow{d}\mathcal{N}\!\left(0,\frac{\Delta^2}{12}\right),
\]
which is the stated convergence of \(\delta z_{j,n}\). If \(\bar z_{j,n}\to z_j\), Slutsky's theorem,
stated for example as Slutsky's lemma by \citet[Lemma~2.8]{vanDerVaart-1998-AsymptoticStatistics}, gives
the corresponding convergence of \(\hat z_{j,n}=\bar z_{j,n}+\delta z_{j,n}\).
\hspace*{\fill}\qed
\end{pf}

Theorem~\ref{thm:layer-aggregation} makes the perturbation endogenous: it comes from componentwise
input quantization. For any actual fixed input the residuals are deterministic; the theorem states an
explicit i.i.d.\ additive-noise surrogate under which the layer preactivation satisfies a standard CLT.
Thus the Gaussian object appears only after aggregation in the linear preactivation, not at the level of the
scalar quantizer itself.

\begin{rem}\label{rem:multilayer-independence}
In a multilayer network, exact independence of internal perturbations is generally lost after the first
nonlinearity without considering the geometry of the latent space. Accordingly, the Gaussian smoothing picture 
should be read as a coarse-grained empirical surrogate model, not as a literal layer-wise central-limit identity; 
\citep{Salishev-2026-SPSAView} provides empirical support for this viewpoint in both training and inference.
On the training side, smooth surrogate gradients give stable low-bit optimization behavior. On the inference
side, for smooth image-to-image super-resolution, the reported W4A4 RFDN experiments keep overlapping final PSNR
confidence intervals deterministic rounding-noise variants. This suggests that, on the
tested data distribution, harmful quantization events are rare or small enough to be absorbed by the
reconstruction task.
\end{rem}

\begin{cor}\label{cor:smoothed-activation}
Under the conditions of Theorem~\ref{thm:layer-aggregation}, if \(\sigma\) is the sign function,
the quantized layer activation \(\sigma(\hat z_{j,n})\) is replaced in the coarse-grained surrogate by the
smoothed activation
\[
\tilde\sigma(\bar z_{j,n}) = 2\,\Phi\!\left(\frac{\bar z_{j,n}}{\sqrt{s}}\right)-1,
\]
where \(s\) is the variance of the Gaussian approximation to \(\delta z_{j,n}\); in the i.i.d.\ uniform
sampling model above, \(s=\Delta^2/12\). For the ReLU function, the corresponding surrogate is
\(\bar z_{j,n}\,\Phi(\bar z_{j,n}/\sqrt{s})+\sqrt{s}\,\phi(\bar z_{j,n}/\sqrt{s})\). This constitutes a
coarse-grained surrogate architecture, not an exact equality for the full multilayer network.
\end{cor}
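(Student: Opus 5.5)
The corollary is essentially a bookkeeping statement: it combines the Gaussian limit of Theorem~\ref{thm:layer-aggregation} with the closed-form one-dimensional averages of Theorems~\ref{thm:sign} and~\ref{thm:relu}. I would proceed in three steps.

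\textbf{Step 1: identify the coarse-grained surrogate.} In the coarse-grained surrogate, the preactivation perturbation \(\delta z_{j,n}\) is replaced by its Gaussian limit \(\xi\sim\mathcal{N}(0,s)\) with \(s=\Delta^2/12\). This step needs the mean and variance identities \(\mathbb{E}[\delta z_{j,n}]=0\) and \(\operatorname{Var}(\delta z_{j,n})=\Delta^2/12\), which hold for every \(n\), together with the convergence in distribution \(\delta z_{j,n}\xrightarrow{d}\mathcal{N}(0,\Delta^2/12)\), all from Theorem~\ref{thm:layer-aggregation}. Because the variance is exact for each \(n\), matching \(s\) to it is consistent at finite \(n\) and not only in the limit. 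The surrogate activation is therefore defined as
\[
\tilde\sigma(\bar z_{j,n}) := \mathbb{E}_{\xi\sim\mathcal{N}(0,s)}\bigl[\sigma(\bar z_{j,n}+\xi)\bigr].
\]
This is exactly the Gaussian average \(g(\bar z_{j,n},s)\) of \eqref{g(x,s)} in dimension one.

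\textbf{Step 2: evaluate the average in closed form.} For \(\sigma=\operatorname{sign}\), apply Theorem~\ref{thm:sign} with \(x=\bar z_{j,n}\). This gives \(2\Phi(\bar z_{j,n}/\sqrt{s})-1\), and the value of \(\operatorname{sign}(0)\) is irrelevant. For ReLU, Theorem~\ref{thm:relu} gives \(\bar z_{j,n}\Phi(\bar z_{j,n}/\sqrt{s})+\sqrt{s}\,\phi(\bar z_{j,n}/\sqrt{s})\).

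\textbf{Step 3: justify the replacement and its limits.} I would show that the expectation of the true quantized activation converges to the surrogate. For sign, \(\mathbb{E}[\operatorname{sign}(\hat z_{j,n})]=1-2\,\mathbb{P}(\hat z_{j,n}<0)\). By the Slutsky consequence in Theorem~\ref{thm:layer-aggregation}, \(\hat z_{j,n}\xrightarrow{d}\mathcal{N}(z_j,s)\), and this limit law has no atom at \(0\). Hence \(\mathbb{E}[\operatorname{sign}(\hat z_{j,n})]\to 2\Phi(z_j/\sqrt{s})-1\), which equals \(\lim_n\tilde\sigma(\bar z_{j,n})\) by continuity of \(\Phi\).

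For ReLU, the function is continuous but unbounded, so the portmanteau theorem does not apply directly. Instead, I would use uniform integrability: the second moment \(\mathbb{E}[\hat z_{j,n}^2]=\bar z_{j,n}^2+\Delta^2/12\) is bounded in \(n\), and \(\operatorname{ReLU}(y)\le|y|\), so the expectations converge as well.

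The main obstacle is not computational. It is being precise that the statement is a modeling replacement, not an identity. Only the single-unit expectation is guaranteed to match, and only asymptotically, and for a fixed realization or for multilayer networks no equality holds (Remark~\ref{rem:multilayer-independence}). I would state the proved content as the two limit identities above and note that the corollary's ``replacement'' is a definition justified by them.
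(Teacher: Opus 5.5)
Your Steps 1 and 2 are what the paper does. It gives no separate proof: the corollary is read off by taking \(s=\Delta^2/12\) from Theorem~\ref{thm:layer-aggregation} and substituting \(x=\bar z_{j,n}\) into the closed forms of Theorems~\ref{thm:sign} and~\ref{thm:relu}. The ``replacement'' is a modeling definition whose limitations are discussed only in the surrounding remarks.

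Your Step 3 goes beyond the paper by proving that \(\mathbb{E}[\sigma(\hat z_{j,n})]\) and \(\tilde\sigma(\bar z_{j,n})\) have the same limit. Both arguments are correct: the Gaussian limit has no atom at \(0\) for sign, and bounded second moments give uniform integrability for ReLU.

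Both arguments, however, tacitly use \(\bar z_{j,n}\to z_j\). You need it for the Slutsky limit and for bounding \(\bar z_{j,n}^2+\Delta^2/12\). This is the extra hypothesis of the ``consequently'' clause of Theorem~\ref{thm:layer-aggregation}, not one of the conditions 1--3 that the corollary invokes.

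You can drop this hypothesis and get a stronger, uniform statement. Let \(F_n\) be the distribution function of \(\delta z_{j,n}\). For sign, \(\delta z_{j,n}\) has a continuous law, so \(\mathbb{E}[\operatorname{sign}(x+\delta z_{j,n})]-\bigl(2\Phi(x/\sqrt{s})-1\bigr)=2\bigl(\Phi(-x/\sqrt{s})-F_n(-x)\bigr)\). P\'olya's theorem then gives convergence uniform in \(x\), and Berry--Esseen gives a uniform rate \(O(n^{-1/2})\).

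For ReLU, which is \(1\)-Lipschitz, the error is at most the Wasserstein-\(1\) distance between the law of \(\delta z_{j,n}\) and \(\mathcal{N}(0,s)\). This distance tends to zero because \(\delta z_{j,n}\xrightarrow{d}\mathcal{N}(0,s)\), and the constant second moment makes \(|\delta z_{j,n}|\) uniformly integrable.

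Together these give \(\sup_x\bigl|\mathbb{E}[\sigma(x+\delta z_{j,n})]-\tilde\sigma(x)\bigr|\to 0\). This is the form in which the replacement is actually used: at finite \(n\), with \(\bar z_{j,n}\) arbitrary rather than convergent.
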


\begin{rem}\label{rem:training-averaging}
The same limiting-envelope viewpoint also appears in training: batch losses average over data samples, and the
quantized version of \eqref{eq:perceptron} averages many small componentwise residuals inside each
preactivation. The training-side analysis of noisy STE~\citep{Bengio-2013-STE} and noise-scale updates is developed in
\citep{Salishev-2026-SPSAView}. Applying continuous gradient descent to binary weights is implemented 
in GDNSQ~\citep{Salishev-2025-GDNSQ} and is not a point of this paper.
\end{rem}

\begin{rem}\label{rem:control-transfer}
The payoff for control is local surrogate use rather than exact equivalence. In inference, layer-wise
aggregation yields the Gaussian envelope used in the surrogate activations; in training, batch averaging yields
the analogous averaged object. On operating regions where the perceptron has
\((\epsilon_1,\epsilon_2)\)-bounded local oscillation, Theorem~\ref{thm:stability} then controls the mismatch
between that Gaussian-averaged proxy and the discrete model.
\end{rem}

\section{Conclusion}
\label{sec:conclusion}

Gaussian convolution supplies a \(C^\infty\) surrogate for discontinuous quantized models, while bounded local
oscillation determines when that surrogate remains accurate and yields the local error bound of
Theorem~\ref{thm:stability}. In the high-dimensional perceptron example, Theorem~\ref{thm:layer-aggregation}
shows that, under an explicit i.i.d.\ additive-noise quantization surrogate, coordinatewise residuals aggregate
into a Gaussian envelope for a layer preactivation. This produces the smoothed activations studied here and
supplies the same \(C^\infty\) surrogate for inference-side analysis and training-side smooth surrogate
gradients. Theorem~\ref{thm:layer-aggregation} has only layer-wise reading 
without analyzing latent space geometry and thus has mostly empirical support for multilayer networks.

Theorem~\ref{thm:stability} itself is depth-agnostic and applies to networks of arbitrary depth wherever bounded
local oscillation holds. The resulting surrogate should be understood as a smooth probabilistic proxy whose
local mismatch is controlled on trusted data-supported regions, while excursions outside those regions are
rare-event disturbances to be handled by control-specific robustness or chance-constrained analysis. Future work
should include a theoretical analysis of internal latent geometry beyond a single layer, as well as broader
catalogs of smoothed primitives and control-oriented tools built on these surrogates.


\section*{DECLARATION OF GENERATIVE AI AND AI-ASSISTED TECHNOLOGIES IN THE WRITING PROCESS}
During the preparation of this work, the authors used OpenAI ChatGPT in order to assist with 
language editing, formatting, and drafting of the abstract and keywords. After using this tool, 
the authors reviewed and edited the content as needed and take full responsibility for the content of the publication.

\bibliography{main}             

\end{document}